\newcommand{\safemath}[2]{\newcommand{#1}{\ensuremath{#2}\xspace}}
\safemath{\bma}{\mathbf{a}}
\safemath{\bmb}{\mathbf{b}}
\safemath{\bmc}{\mathbf{c}}
\safemath{\bmd}{\mathbf{d}}
\safemath{\bme}{\mathbf{e}}
\safemath{\bmf}{\mathbf{f}}
\safemath{\bmg}{\mathbf{g}}
\safemath{\bmh}{\mathbf{h}}
\safemath{\bmi}{\mathbf{i}}
\safemath{\bmj}{\mathbf{j}}
\safemath{\bmk}{\mathbf{k}}
\safemath{\bml}{\mathbf{l}}
\safemath{\bmm}{\mathbf{m}}
\safemath{\bmn}{\mathbf{n}}
\safemath{\bmo}{\mathbf{o}}
\safemath{\bmp}{\mathbf{p}}
\safemath{\bmq}{\mathbf{q}}
\safemath{\bmr}{\mathbf{r}}
\safemath{\bms}{\mathbf{s}}
\safemath{\bmt}{\mathbf{t}}
\safemath{\bmu}{\mathbf{u}}
\safemath{\bmv}{\mathbf{v}}
\safemath{\bmw}{\mathbf{w}}
\safemath{\bmx}{\mathbf{x}}
\safemath{\bmy}{\mathbf{y}}
\safemath{\bmz}{\mathbf{z}}
\safemath{\bmzero}{\mathbf{0}}
\safemath{\bmone}{\mathbf{1}}
\bmdefine{\biad}{a}
\bmdefine{\bibd}{b}
\bmdefine{\bicd}{c}
\bmdefine{\bidd}{d}
\bmdefine{\bied}{e}
\bmdefine{\bifd}{f}
\bmdefine{\bigd}{g}
\bmdefine{\bihd}{h}
\bmdefine{\biid}{i}
\bmdefine{\bijd}{j}
\bmdefine{\bikd}{k}
\bmdefine{\bild}{l}
\bmdefine{\bimd}{m}
\bmdefine{\bind}{n}
\bmdefine{\biod}{o}
\bmdefine{\bipd}{p}
\bmdefine{\biqd}{q}
\bmdefine{\bird}{r}
\bmdefine{\bisd}{s}
\bmdefine{\bitd}{t}
\bmdefine{\biud}{u}
\bmdefine{\bivd}{v}
\bmdefine{\biwd}{w}
\bmdefine{\bixd}{x}
\bmdefine{\biyd}{y}
\bmdefine{\bizd}{z}
\bmdefine{\bixid}{\xi}
\bmdefine{\bilambdad}{\lambda}
\bmdefine{\bimud}{\mu}
\bmdefine{\bithetad}{\theta}
\bmdefine{\biphid}{\phi}
\bmdefine{\bideltad}{\delta}
\safemath{\bmia}{\biad}
\safemath{\bmib}{\bibd}
\safemath{\bmic}{\bicd}
\safemath{\bmid}{\bidd}
\safemath{\bmie}{\bied}
\safemath{\bmif}{\bifd}
\safemath{\bmig}{\bigd}
\safemath{\bmih}{\bihd}
\safemath{\bmii}{\biid}
\safemath{\bmij}{\bijd}
\safemath{\bmik}{\bikd}
\safemath{\bmil}{\bild}
\safemath{\bmim}{\bimd}
\safemath{\bmin}{\bind}
\safemath{\bmio}{\biod}
\safemath{\bmip}{\bipd}
\safemath{\bmiq}{\biqd}
\safemath{\bmir}{\bird}
\safemath{\bmis}{\bisd}
\safemath{\bmit}{\bitd}
\safemath{\bmiu}{\biud}
\safemath{\bmiv}{\bivd}
\safemath{\bmiw}{\biwd}
\safemath{\bmix}{\bixd}
\safemath{\bmiy}{\biyd}
\safemath{\bmiz}{\bizd}
\safemath{\bmxi}{\bixid}
\safemath{\bmlambda}{\bilambdad}
\safemath{\bmmu}{\bimud}
\safemath{\bmtheta}{\bithetad}
\safemath{\bmphi}{\biphid}
\safemath{\bmdelta}{\bideltad}
\safemath{\bA}{\mathbf{A}}
\safemath{\bB}{\mathbf{B}}
\safemath{\bC}{\mathbf{C}}
\safemath{\bD}{\mathbf{D}}
\safemath{\bE}{\mathbf{E}}
\safemath{\bF}{\mathbf{F}}
\safemath{\bG}{\mathbf{G}}
\safemath{\bH}{\mathbf{H}}
\safemath{\bI}{\mathbf{I}}
\safemath{\bJ}{\mathbf{J}}
\safemath{\bK}{\mathbf{K}}
\safemath{\bL}{\mathbf{L}}
\safemath{\bM}{\mathbf{M}}
\safemath{\bN}{\mathbf{N}}
\safemath{\bO}{\mathbf{O}}
\safemath{\bP}{\mathbf{P}}
\safemath{\bQ}{\mathbf{Q}}
\safemath{\bR}{\mathbf{R}}
\safemath{\bS}{\mathbf{S}}
\safemath{\bT}{\mathbf{T}}
\safemath{\bU}{\mathbf{U}}
\safemath{\bV}{\mathbf{V}}
\safemath{\bW}{\mathbf{W}}
\safemath{\bX}{\mathbf{X}}
\safemath{\bY}{\mathbf{Y}}
\safemath{\bZ}{\mathbf{Z}}
\safemath{\bZero}{\mathbf{0}}
\safemath{\bOne}{\mathbf{1}}
\safemath{\bDelta}{\mathbf{\Delta}}
\safemath{\bLambda}{\mathbf{\Lambda}}
\safemath{\bPhi}{\mathbf{\Upphi}}
\safemath{\bSigma}{\mathbf{\Upsigma}}
\safemath{\bOmega}{\mathbf{\Upomega}}
\safemath{\bTheta}{\mathbf{\Uptheta}}
\bmdefine{\biAd}{A}
\bmdefine{\biBd}{B}
\bmdefine{\biCd}{C}
\bmdefine{\biDd}{D}
\bmdefine{\biEd}{E}
\bmdefine{\biFd}{F}
\bmdefine{\biGd}{G}
\bmdefine{\biHd}{H}
\bmdefine{\biId}{I}
\bmdefine{\biJd}{J}
\bmdefine{\biKd}{K}
\bmdefine{\biLd}{L}
\bmdefine{\biMd}{M}
\bmdefine{\biOd}{N}
\bmdefine{\biPd}{O}
\bmdefine{\biQd}{P}
\bmdefine{\biRd}{R}
\bmdefine{\biSd}{S}
\bmdefine{\biTd}{T}
\bmdefine{\biUd}{U}
\bmdefine{\biVd}{V}
\bmdefine{\biWd}{W}
\bmdefine{\biXd}{X}
\bmdefine{\biYd}{Y}
\bmdefine{\biZd}{Z}
\bmdefine{\biDelta}{\Delta}
\bmdefine{\biLambda}{\Lambda}
\bmdefine{\biPhi}{\Phi}
\bmdefine{\biSigma}{\Sigma}
\bmdefine{\biOmega}{\Omega}
\bmdefine{\biTheta}{\Theta}
\safemath{\bimA}{\biAd}
\safemath{\bimB}{\biBd}
\safemath{\bimC}{\biCd}
\safemath{\bimD}{\biDd}
\safemath{\bimE}{\biEd}
\safemath{\bimF}{\biFd}
\safemath{\bimG}{\biGd}
\safemath{\bimH}{\biHd}
\safemath{\bimI}{\biId}
\safemath{\bimJ}{\biJd}
\safemath{\bimK}{\biKd}
\safemath{\bimL}{\biLd}
\safemath{\bimM}{\biMd}
\safemath{\bimN}{\biNd}
\safemath{\bimO}{\biOd}
\safemath{\bimP}{\biPd}
\safemath{\bimQ}{\biQd}
\safemath{\bimR}{\biRd}
\safemath{\bimS}{\biSd}
\safemath{\bimT}{\biTd}
\safemath{\bimU}{\biUd}
\safemath{\bimV}{\biVd}
\safemath{\bimW}{\biWd}
\safemath{\bimX}{\biXd}
\safemath{\bimY}{\biYd}
\safemath{\bimZ}{\biZd}
\safemath{\bimDelta}{\biDelta}
\safemath{\bimLambda}{\biLambda}
\safemath{\bimPhi}{\biPhi}
\safemath{\bimSigma}{\biSigma}
\safemath{\bimOmega}{\biOmega}
\safemath{\bimTheta}{\biTheta}
\safemath{\setA}{\mathcal{A}}
\safemath{\setB}{\mathcal{B}}
\safemath{\setC}{\mathcal{C}}
\safemath{\setD}{\mathcal{D}}
\safemath{\setE}{\mathcal{E}}
\safemath{\setF}{\mathcal{F}}
\safemath{\setG}{\mathcal{G}}
\safemath{\setH}{\mathcal{H}}
\safemath{\setI}{\mathcal{I}}
\safemath{\setJ}{\mathcal{J}}
\safemath{\setK}{\mathcal{K}}
\safemath{\setL}{\mathcal{L}}
\safemath{\setM}{\mathcal{M}}
\safemath{\setN}{\mathcal{N}}
\safemath{\setO}{\mathcal{O}}
\safemath{\setP}{\mathcal{P}}
\safemath{\setQ}{\mathcal{Q}}
\safemath{\setR}{\mathcal{R}}
\safemath{\setS}{\mathcal{S}}
\safemath{\setT}{\mathcal{T}}
\safemath{\setU}{\mathcal{U}}
\safemath{\setV}{\mathcal{V}}
\safemath{\setW}{\mathcal{W}}
\safemath{\setX}{\mathcal{X}}
\safemath{\setY}{\mathcal{Y}}
\safemath{\setZ}{\mathcal{Z}}
\safemath{\emptySet}{\varnothing}
\safemath{\colA}{\mathscr{A}}
\safemath{\colB}{\mathscr{B}}
\safemath{\colC}{\mathscr{C}}
\safemath{\colD}{\mathscr{D}}
\safemath{\colE}{\mathscr{E}}
\safemath{\colF}{\mathscr{F}}
\safemath{\colG}{\mathscr{G}}
\safemath{\colH}{\mathscr{H}}
\safemath{\colI}{\mathscr{I}}
\safemath{\colJ}{\mathscr{J}}
\safemath{\colK}{\mathscr{K}}
\safemath{\colL}{\mathscr{L}}
\safemath{\colM}{\mathscr{M}}
\safemath{\colN}{\mathscr{N}}
\safemath{\colO}{\mathscr{O}}
\safemath{\colP}{\mathscr{P}}
\safemath{\colQ}{\mathscr{Q}}
\safemath{\colR}{\mathscr{R}}
\safemath{\colS}{\mathscr{S}}
\safemath{\colT}{\mathscr{T}}
\safemath{\colU}{\mathscr{U}}
\safemath{\colV}{\mathscr{V}}
\safemath{\colW}{\mathscr{W}}
\safemath{\colX}{\mathscr{X}}
\safemath{\colY}{\mathscr{Y}}
\safemath{\colZ}{\mathscr{Z}}
\safemath{\opA}{\mathbb{A}}
\safemath{\opB}{\mathbb{B}}
\safemath{\opC}{\mathbb{C}}
\safemath{\opD}{\mathbb{D}}
\safemath{\opE}{\mathbb{E}}
\safemath{\opF}{\mathbb{F}}
\safemath{\opG}{\mathbb{G}}
\safemath{\opH}{\mathbb{H}}
\safemath{\opI}{\mathbb{I}}
\safemath{\opJ}{\mathbb{J}}
\safemath{\opK}{\mathbb{K}}
\safemath{\opL}{\mathbb{L}}
\safemath{\opM}{\mathbb{M}}
\safemath{\opN}{\mathbb{N}}
\safemath{\opO}{\mathbb{O}}
\safemath{\opP}{\mathbb{P}}
\safemath{\opQ}{\mathbb{Q}}
\safemath{\opR}{\mathbb{R}}
\safemath{\opS}{\mathbb{S}}
\safemath{\opT}{\mathbb{T}}
\safemath{\opU}{\mathbb{U}}
\safemath{\opV}{\mathbb{V}}
\safemath{\opW}{\mathbb{W}}
\safemath{\opX}{\mathbb{X}}
\safemath{\opY}{\mathbb{Y}}
\safemath{\opZ}{\mathbb{Z}}
\safemath{\opZero}{\mathbb{O}}
\safemath{\identityop}{\opI}
\safemath{\veca}{\bma}
\safemath{\vecb}{\bmb}
\safemath{\vecc}{\bmc}
\safemath{\vecd}{\bmd}
\safemath{\vece}{\bme}
\safemath{\vecf}{\bmf}
\safemath{\vecg}{\bmg}
\safemath{\vech}{\bmh}
\safemath{\veci}{\bmi}
\safemath{\vecj}{\bmj}
\safemath{\veck}{\bmk}
\safemath{\vecl}{\bml}
\safemath{\vecm}{\bmm}
\safemath{\vecn}{\bmn}
\safemath{\veco}{\bmo}
\safemath{\vecp}{\bmp}
\safemath{\vecq}{\bmq}
\safemath{\vecr}{\bmr}
\safemath{\vecs}{\bms}
\safemath{\vect}{\bmt}
\safemath{\vecu}{\bmu}
\safemath{\vecv}{\bmv}
\safemath{\vecw}{\bmw}
\safemath{\vecx}{\bmx}
\safemath{\vecy}{\bmy}
\safemath{\vecz}{\bmz}
\safemath{\veczero}{\bmzero}
\safemath{\vecone}{\bmone}
\safemath{\vecxi}{\bmxi}
\safemath{\veclambda}{\bmlambda}
\safemath{\vecmu}{\bmmu}
\safemath{\vectheta}{\bmtheta}
\safemath{\vecphi}{\bmphi}
\safemath{\vecdelta}{\bmdelta}
\safemath{\matA}{\bA}
\safemath{\matB}{\bB}
\safemath{\matC}{\bC}
\safemath{\matD}{\bD}
\safemath{\matE}{\bE}
\safemath{\matF}{\bF}
\safemath{\matG}{\bG}
\safemath{\matH}{\bH}
\safemath{\matI}{\bI}
\safemath{\matJ}{\bJ}
\safemath{\matK}{\bK}
\safemath{\matL}{\bL}
\safemath{\matM}{\bM}
\safemath{\matN}{\bN}
\safemath{\matO}{\bO}
\safemath{\matP}{\bP}
\safemath{\matQ}{\bQ}
\safemath{\matR}{\bR}
\safemath{\matS}{\bS}
\safemath{\matT}{\bT}
\safemath{\matU}{\bU}
\safemath{\matV}{\bV}
\safemath{\matW}{\bW}
\safemath{\matX}{\bX}
\safemath{\matY}{\bY}
\safemath{\matZ}{\bZ}
\safemath{\matzero}{\bmzero}
\safemath{\matDelta}{\bDelta}
\safemath{\matLambda}{\bLambda}
\safemath{\matPhi}{\bPhi}
\safemath{\matSigma}{\bSigma}
\safemath{\matOmega}{\bOmega}
\safemath{\matTheta}{\bTheta}
\safemath{\matidentity}{\matI}
\safemath{\matone}{\matO}
\safemath{\rnda}{A}
\safemath{\rndb}{B}
\safemath{\rndc}{C}
\safemath{\rndd}{D}
\safemath{\rnde}{E}
\safemath{\rndf}{F}
\safemath{\rndg}{G}
\safemath{\rndh}{H}
\safemath{\rndi}{I}
\safemath{\rndj}{J}
\safemath{\rndk}{K}
\safemath{\rndl}{L}
\safemath{\rndm}{M}
\safemath{\rndn}{N}
\safemath{\rndo}{O}
\safemath{\rndp}{P}
\safemath{\rndq}{Q}
\safemath{\rndr}{R}
\safemath{\rnds}{S}
\safemath{\rndt}{T}
\safemath{\rndu}{U}
\safemath{\rndv}{V}
\safemath{\rndw}{W}
\safemath{\rndx}{X}
\safemath{\rndy}{Y}
\safemath{\rndz}{Z}
\safemath{\rveca}{\bimA}
\safemath{\rvecb}{\bimB}
\safemath{\rvecc}{\bimC}
\safemath{\rvecd}{\bimD}
\safemath{\rvece}{\bimE}
\safemath{\rvecf}{\bimF}
\safemath{\rvecg}{\bimG}
\safemath{\rvech}{\bimH}
\safemath{\rveci}{\bimI}
\safemath{\rvecj}{\bimJ}
\safemath{\rveck}{\bimK}
\safemath{\rvecl}{\bimL}
\safemath{\rvecm}{\bimM}
\safemath{\rvecn}{\bimN}
\safemath{\rveco}{\bomO}
\safemath{\rvecp}{\bimP}
\safemath{\rvecq}{\bimQ}
\safemath{\rvecr}{\bimR}
\safemath{\rvecs}{\bimS}
\safemath{\rvect}{\bimT}
\safemath{\rvecu}{\bimU}
\safemath{\rvecv}{\bimV}
\safemath{\rvecw}{\bimW}
\safemath{\rvecx}{\bimX}
\safemath{\rvecy}{\bimY}
\safemath{\rvecz}{\bimZ}
\safemath{\rvecxi}{\bmxi}
\safemath{\rveclambda}{\bmlambda}
\safemath{\rvecmu}{\bmmu}
\safemath{\rvectheta}{\bmtheta}
\safemath{\rvecphi}{\bmphi}
\safemath{\rmatA}{\bimA}
\safemath{\rmatB}{\bimB}
\safemath{\rmatC}{\bimC}
\safemath{\rmatD}{\bimD}
\safemath{\rmatE}{\bimE}
\safemath{\rmatF}{\bimF}
\safemath{\rmatG}{\bimG}
\safemath{\rmatH}{\bimH}
\safemath{\rmatI}{\bimI}
\safemath{\rmatJ}{\bimJ}
\safemath{\rmatK}{\bimK}
\safemath{\rmatL}{\bimL}
\safemath{\rmatM}{\bimM}
\safemath{\rmatN}{\bimN}
\safemath{\rmatO}{\bimO}
\safemath{\rmatP}{\bimP}
\safemath{\rmatQ}{\bimQ}
\safemath{\rmatR}{\bimR}
\safemath{\rmatS}{\bimS}
\safemath{\rmatT}{\bimT}
\safemath{\rmatU}{\bimU}
\safemath{\rmatV}{\bimV}
\safemath{\rmatW}{\bimW}
\safemath{\rmatX}{\bimX}
\safemath{\rmatY}{\bimY}
\safemath{\rmatZ}{\bimZ}
\safemath{\rmatDelta}{\bimDelta}
\safemath{\rmatLambda}{\bimLambda}
\safemath{\rmatPhi}{\bimPhi}
\safemath{\rmatSigma}{\bimSigma}
\safemath{\rmatOmega}{\bimOmega}
\safemath{\rmatTheta}{\bimTheta}
\newenvironment{textbmatrix}{	\setlength{\arraycolsep}{2.5pt}%
								\big[\begin{matrix}}{\end{matrix}\big]%
								\raisebox{0.08ex}{\vphantom{M}}}
\def\be{\begin{equation}}
\def\ee{\end{equation}}
\def\een{\nonumber \end{equation}}
\def\mat{\begin{bmatrix}}
\def\emat{\end{bmatrix}}
\def\btm{\begin{textbmatrix}}
\def\etm{\end{textbmatrix}}
\def\ba#1\ea{\begin{align}#1\end{align}}
\def\bas#1\eas{\begin{align*}#1\end{align*}}
\def\bs#1\es{\begin{split}#1\end{split}} 
\def\bg#1\eg{\begin{gather}#1\end{gather}}
\def\bml#1\eml{\begin{multline}#1\end{multline}}
\def\bi#1\ei{\begin{itemize}#1\end{itemize}}
\newcommand{\lefto}{\mathopen{}\left}
\DeclareMathOperator{\tr}{tr}				
\DeclareMathOperator{\sign}{sign}			
\DeclareMathOperator{\Exop}{\opE}			
\newcommand{\Ex}[2]{\ensuremath{\Exop_{#1}\lefto[#2\right]}} 	
\newcommand{\abs}[1]{\lefto\lvert#1\right\rvert}		
\safemath{\dirac}{\delta}					
\safemath{\krond}{\dirac}					
\safemath{\upto}{\uparrow}
\safemath{\downto}{\downarrow}
\safemath{\iu}{j}							
\safemath{\ev}{\lambda}						
\safemath{\hilseqspace}{l^{2}}				
\newcommand{\banachfunspace}[1]{\setL^{#1}}	
\safemath{\hilfunspace}{\banachfunspace{2}}	
\safemath{\SNR}{\textsf{SNR}} 				
\safemath{\PAR}{\textsf{PAR}} 				
\safemath{\No}{N_0}							
\safemath{\Es}{E_s}							
\safemath{\Eb}{E_b}							
\safemath{\EbNo}{\frac{\Eb}{\No}}
\safemath{\EsNo}{\frac{\Es}{\No}}
\DeclareMathOperator{\CHop}{\ensuremath{\opH}} 
\safemath{\tvir}{\rndh_{\CHop}}				
\safemath{\tvtf}{\rndl_{\CHop}}				
\safemath{\spf}{\rnds_{\CHop}}				
\safemath{\bff}{H_{\CHop}}					
\safemath{\ircf}{r_{h}}						
\safemath{\tftvcf}{r_{s}}					
\safemath{\tfcf}{r_{l}}						
\safemath{\bfcf}{r_{H}}						
\safemath{\tcorr}{c_h}						
\safemath{\scf}{c_{s}}						
\safemath{\tfcorr}{c_{l}}					
\safemath{\fcorr}{c_{H}}						
\safemath{\mi}{I}							
\safemath{\capacity}{C}						
\safemath{\normal}{\mathcal{N}}			
\safemath{\jpg}{\mathcal{CN}}			
\safemath{\mchain}{\leftrightarrow}		
\safemath{\dB}{\,\mathrm{dB}}
\safemath{\dBm}{\,\mathrm{dBm}}
\safemath{\Hz}{\,\mathrm{Hz}}
\safemath{\kHz}{\,\mathrm{kHz}}
\safemath{\MHz}{\,\mathrm{MHz}}
\safemath{\GHz}{\,\mathrm{GHz}}
\safemath{\s}{\,\mathrm{s}}
\safemath{\ms}{\,\mathrm{ms}}
\safemath{\mus}{\,\mathrm{\text{\textmu}s}}
\safemath{\ns}{\,\mathrm{ns}}
\safemath{\ps}{\,\mathrm{ps}}
\safemath{\meter}{\,\mathrm{m}}
\safemath{\mm}{\,\mathrm{mm}}
\safemath{\cm}{\,\mathrm{cm}}
\safemath{\m}{\,\mathrm{m}}
\safemath{\W}{\,\mathrm{W}}
\safemath{\mW}{\, \mathrm{mW}}
\safemath{\J}{\,\mathrm{J}}
\safemath{\K}{\,\mathrm{K}}
\safemath{\bit}{\,\mathrm{bit}}
\safemath{\nat}{\,\mathrm{nat}}
\safemath{\define}{\triangleq}			
\safemath{\equivalent}{\sim}
\safemath{\distas}{\sim}					
\safemath{\sdiff}{\Delta}				
\safemath{\reals}{\mathbb{R}}
\safemath{\positivereals}{\reals_{+}}
\safemath{\integers}{\mathbb{Z}}
\safemath{\posint}{\integers_{+}}
\safemath{\naturals}{\mathbb{N}}
\safemath{\posnaturals}{\naturals_{+}}
\safemath{\complexset}{\mathbb{C}}
\safemath{\rationals}{\mathbb{Q}}
\newcommand*{\fancyrefapplabelprefix}{app}		
\newcommand*{\fancyrefthmlabelprefix}{thm}		
\newcommand*{\fancyreflemlabelprefix}{lem}		
\newcommand*{\fancyrefcorlabelprefix}{cor}		
\newcommand*{\fancyrefdeflabelprefix}{def}		
\newcommand*{\fancyrefproplabelprefix}{prop}		
\newcommand*{\fancyrefobslabelprefix}{obs}		
\newcommand*{\fancyrefexmpllabelprefix}{exmpl}
\newcommand*{\fancyrefalglabelprefix}{alg}		
\newtheorem{thm}{Theorem}
\newtheorem{cor}[thm]{Corollary}   
\newtheorem{lem}[thm]{Lemma} 
\newtheorem{rem}{Remark}
\safemath{\dictab}{[\,\dicta\,\,\dictb\,]}
\safemath{\ysig}{\bmy}
\safemath{\ysighat}{\hat{\ysig}}
\safemath{\ysigdim}{M}
\safemath{\xsig}{\bmx}
\safemath{\xsigdim}{N}
\safemath{\nx}{n_x}
\safemath{\zsig}{\bmz}
\safemath{\zsigdim}{\ysigdim}
\safemath{\rsig}{\bmr}
\safemath{\Adict}{\bA}
\safemath{\Adicttilde}{\widetilde{\Adict}}
\safemath{\Adictdim}{\outputdim\times\xsigdim}
\safemath{\avec}{\bma}
\safemath{\avectilde}{\tilde{\avec}}
\safemath{\Bdict}{\bB}
\safemath{\Bdicttilde}{\widetilde{\Bdict}}
\safemath{\Cdict}{\bC}
\safemath{\cvec}{\bmc}
\safemath{\Ddict}{\bD}
\safemath{\Ddictdim}{\ysigdim\times\xsigdim}
\safemath{\dvec}{\bmd}
\safemath{\Ddicttilde}{\widetilde{\bD}}
\safemath{\Bonb}{\bB}
\safemath{\bvec}{\bmb}
\safemath{\Bonbdim}{\ysigdim\times\ysigdim}
\safemath{\noise}{\bmn}
\safemath{\noisedim}{\ysigim}
\safemath{\err}{\bme}
\safemath{\errdim}{\ysigdim}
\safemath{\errset}{\setE}
\safemath{\nerr}{n_e}
\safemath{\delop}{\bP_\errset}
\safemath{\delopc}{\bP_{{\errset}^c}}
\safemath{\cplxi}{\imath}
\safemath{\cplxj}{\jmath}
\safemath{\dict}{\matD}
\safemath{\inputdim}{N}		
\safemath{\outputdim}{M}		
\safemath{\sparsity}{S}	
\safemath{\inputdimA}{{N_a}}	
\safemath{\inputdimB}{{N_b}}	
\safemath{\elemA}{{n_a}}	
\safemath{\elemB}{{n_b}}	
\safemath{\resA}{\matR_a}	
\safemath{\resB}{\matR_b}	
\safemath{\subD}{\matS} 
\safemath{\subA}{\matS_a} 
\safemath{\subB}{\matS_b} 
\safemath{\dicta}{\matA} 	
\safemath{\dictb}{\matB} 	
\safemath{\hollowS}{H}
\safemath{\hollowA}{H_a}
\safemath{\hollowB}{H_b}
\safemath{\cross}{Z}
\safemath{\coh}{\mu_d}			
\safemath{\coha}{\mu_a}			
\safemath{\cohb}{\mu_b}			
\safemath{\mubs}{\nu}	
\safemath{\cohm}{\mu_m} 
\safemath{\dictset}{\setD}	
\safemath{\dictsetp}{\dictset(\coh,\coha,\cohb)}	
\safemath{\dictsetgen}{\dictset_\text{gen}}
\safemath{\dictsetgenp}{\dictsetgen(\coh)}
\safemath{\dictsetonb}{\dictset_\text{onb}}
\safemath{\dictsetonbp}{\dictsetonb(\coh)}
\safemath{\leftside}{U}
\safemath{\rightsideA}{R_a}
\safemath{\rightsideB}{R_b}
\safemath{\indexS}{\setI_S} 
\safemath{\na}{n_a}			
\safemath{\nb}{n_b}			
\safemath{\coeffa}{p_i}	
\safemath{\coeffb}{q_j}	
\safemath{\seta}{\setP}		
\safemath{\setb}{\setQ}     
\safemath{\setw}{\setW}	
\safemath{\setz}{\setZ}	
\safemath{\cola}{\veca}		
\safemath{\colb}{\vecb}		
\safemath{\cold}{\vecd}		
\safemath{\inputvec}{\vecx} 	
\safemath{\error}{\vece}	
\safemath{\noiseout}{\vecz} 	
\safemath{\inputvecel}{x}
\safemath{\inputveca}{\vecx_a}
\safemath{\inputvecb}{\vecx_b}
\safemath{\outputvec}{\vecy}	
\safemath{\lambdamin}{\lambda_{\mathrm{min}}}
\safemath{\elltwo}{\ell_2}
\safemath{\ellone}{\ell_1}
\safemath{\ellzero}{\ell_0}
\safemath{\ellinf}{\ell_\infty}
\safemath{\ellinftilde}{\ell_{\widetilde\infty}}
\safemath{\licard}{Z(\coh,\coha,\cohb)}
\safemath{\xsol}{\hat{x}}
\safemath{\xbord}{x_b}		
\safemath{\xstat}{x_s}		
\safemath{\xstatLone}{\tilde{x}_s}
\safemath{\order}{\mathcal{O}} 
\safemath{\scales}{\Theta} 
\safemath{\ones}{\mathbf{1}} 
\safemath{\zeroes}{\mathbf{0}} 
\safemath{\thlone}{\kappa(\coh,\cohb)} 
\safemath{\constoneA}{\delta} 
\safemath{\constoneB}{\epsilon} 
\safemath{\nlarge}{L}				   
\safemath{\sumlarge}{S_\nlarge}
\safemath{\maxlarger}{P_\nlarge}	   
\safemath{\Pzero}{\textrm{P0}}	
\safemath{\Pone}{\textrm{P1}}
\safemath{\vecfir}{\vecw}			 
\safemath{\vecsec}{\vecz}
\safemath{\elvecfir}{w}              
\safemath{\elvecsec}{z}				 
\safemath{\nlargefir}{n}
\safemath{\normout}{\gamma}
\safemath{\auxfun}{h}
\safemath{\supp}{\textrm{supp}}
\safemath{\indexa}{\ell}
\safemath{\indexb}{r}
\safemath{\indexc}{i}
\safemath{\indexd}{j}
\safemath{\project}{P}
\begin{document}

\title{Linearized Binary Regression
}

\author{
\IEEEauthorblockN{Andrew S. Lan$^{1}$, Mung Chiang$^{2}$, and Christoph Studer$^{3}$}\\ 
\IEEEauthorblockA{$^\text{1}$Princeton University, Princeton, NJ; andrew.lan@princeton.edu} 
\IEEEauthorblockA{$^\text{2}$Purdue University, West Lafayette, IN; chiang@purdue.edu} 
\IEEEauthorblockA{$^\text{3}$Cornell University, Ithaca, NY; studer@cornell.edu}
\thanks{AL and MC were supported in part by the US National Science Foundation (NSF) under grant CNS-1347234. CS was supported in part by Xilinx Inc.~and by the US NSF under grants ECCS-1408006, CCF-1535897, CAREER CCF-1652065, and CNS-1717559.}
}

\maketitle



\begin{abstract}
Probit regression was first proposed by Bliss in 1934 to study mortality rates of insects. Since then, an extensive body of work has analyzed and used probit or related binary regression methods (such as logistic regression) in numerous applications and fields.
This paper provides a fresh angle to such well-established binary regression methods.
Concretely, we demonstrate that \emph{linearizing} the probit model in combination with \emph{linear estimators} performs \emph{on par} with state-of-the-art nonlinear regression methods, such as posterior mean or maximum a-posteriori estimation, for a broad range of real-world regression problems.
We derive \emph{exact}, \emph{closed-form}, and \emph{nonasymptotic} expressions for the mean-squared error of our linearized estimators, which clearly separates them from nonlinear regression methods that are typically difficult to analyze.
%
We showcase the efficacy of our methods and results for a number of synthetic and real-world datasets, which demonstrates that \emph{linearized binary regression} finds potential use in a variety of inference, estimation, signal processing, and machine learning applications that deal with binary-valued observations or measurements. 
\end{abstract}





\section{Introduction}
This paper deals with the estimation of the $N$-dimensional vector $\bmx\in\reals^N$ from the following measurement model:
\begin{align} \label{eq:probitmodel}
\bmy = \sign(\bD\bmx+\bmw).
\end{align}
Here, the vector $\bmy\in\{-1,+1\}^M$ contains $M$ binary-valued measurements, the  function $\sign(z)$ operates element-wise on its argument and outputs $+1$ for $z\geq0$ and $-1$ otherwise, $\bD\in\reals^{M\times N}$ is a given design matrix (or matrix of covariates).  The noise vector $\bmw\in\reals^M$ has i.i.d.\ random entries.  
Estimation of the vector $\bmx$ from the observation model in \fref{eq:probitmodel} is known as binary regression.  The two most common types of binary regression are (i) \emph{probit regression} \cite{firstprobit} for which the noise vector $\bmw$ follows a standard normal distribution and (ii) \emph{logistic regression}~\cite{logregfirst} for which the noise vector $\bmw$ follows a logistic distribution with unit scale parameter.  

Binary regression finds widespread use in a broad range of applications and fields, including (but not limited to) image classification~\cite{imageclassification}, biomedical data analysis~\cite{cancer,biology}, economics~\cite{economy}, and signal processing~\cite{larry,plan1bit}.  In most real-world applications, one can use either probit or logistic regression, since the noise distribution is unknown; in this paper, we focus on probit regression for reasons that we will detail in \fref{sec:lmmse}.  In what follows, we will assume that the noise vector $\bmw\in\reals^M$ has i.i.d.\ standard normal entries, and refer to~\fref{eq:probitmodel} as the standard probit model.  

\subsection{Relevant Prior Art}
\subsubsection{Estimators}
The two most common estimation techniques for the standard probit model in~\fref{eq:probitmodel} are the posterior mean (PM) and maximum a-posteriori (MAP) estimators. 
The PM estimator computes the following conditional expectation \cite{poorbook}:
\begin{align} \label{eq:PME}
\hat\bmx^\text{PM} =\textstyle  \Ex{\bmx}{\bmx| \bmy} = \int_{\reals^N} \bmx p(\bmx|\bmy)\text{d}\bmx,
\end{align}
where $p(\bmx|\bmy)$ is the posterior probability of the vector $\bmx$ given the observations $\bmy$ under the model  \fref{eq:probitmodel}. The PM estimator is optimal in a sense that it minimizes the mean-squared error (MSE) defined as
\begin{align} \label{eq:MSE}
\textit{MSE}(\hat\bmx) = \Ex{\bmx,\bmw}{\|\bmx-\hat\bmx\|^2},
\end{align} 
and is, hence, also known as the nonlinear minimum mean-squared error (MMSE) estimator. Evaluating the integral in~\fref{eq:PME} for the probit model is difficult and hence, one typically resorts to rather slow Monte-Carlo methods \cite{albertchib}. 
By assuming that the vector $\bmx$ is multivariate Gaussian, an alternative regression technique is the MAP estimator that solves the following convex optimization problem~\cite{probitml}:
\begin{align} \label{eq:MAPestimator}
\hat\bmx^\text{MAP} \!=\! \textstyle \underset{\bmx\in\reals^N}{\text{arg\,min}} \!  -\!\sum_{m=1}^M \log(\Phi(y_m \bmd_m^T \bmx)) + \frac{1}{2} \bmx^T \bC_{\bmx}^{-1} \bmx.
\end{align}
Here, $\Phi(x)=\int_{-\infty}^x (2\pi)^{-1/2}e^{-t^2/2}\text{d}t$ is the cumulative distribution function of a standard normal random variable, $\bmd^T_m$ is the $m$th row of the covariate matrix $\bD$, and $\bC_\bmx$ is the covariance matrix of the zero-mean multivariate Gaussian prior on the vector~$\bmx$. By ignoring the prior on $\bmx$, one arrives at the well-known maximum-likelihood (ML) estimator. 
Compared to the PM estimator, MAP and ML estimation can be implemented efficiently either by solving a series of re-weighted least squares problems \cite{tibsbook} or by using standard numerical methods for convex problems that scale favorably to large problem sizes~\cite{wrightbook,goldstein2014field}. 
In contrast to such well-established nonlinear estimators, we will investigate \emph{linear estimators} that are computationally efficient and whose performance is on par to that of the PM, MAP, and ML estimators. 

\subsubsection{Analytical Results}
Analytical results that characterize the performance of estimation under the probit model are almost exclusively for the \emph{asymptotic} setting, i.e., when $M$ and/or~$N$ tend to infinity.  
More specifically, Brillinger \cite{brillinger1982generalized} has shown in 1982 that the conventional least-squares (LS) estimators for scenarios in which the design matrix $\bD$ has i.i.d.\ Gaussian entries, delivers an estimate that is the same as that of the PM estimator up to a constant. 
More recently, Brillinger's result has been generalized by Thrampoulidis \emph{et al.}~\cite{thrampoulidis2015lasso} to the sparse setting, i.e., where the vector~$\bmx$ has only a few nonzero entries. 
%
%
Other related results analyze the consistency of the ML estimator for sparse logistic regression.  
These results are either asymptotic \cite{honestlogit,plan1bit,wainwrightising} or of probabilistic nature~ \cite{bachlogit}; the latter type of results bounds the MSE with high probability. 
%
In contrast to all such existing analytical results, we will provide \emph{nonasymptotic} and \emph{exact} expressions for the MSE that are valid for \emph{arbitrary} and \emph{deterministic} design matrices $\bD$.

\subsection{Contributions}
We propose novel \emph{linear estimators} of the form $\hat\bmx=\bW\bmy$ for the probit model in \fref{eq:probitmodel}, where $\bW\in\reals^{N\times M}$ are suitably-chosen estimation matrices, and  provide \emph{exact}, \emph{closed-form}, and \emph{nonasymptotic} expressions for the MSE of these estimators.  
Specifically, we will develop two estimators: a linear minimum mean-squared error (L-MMSE) estimator that aims at minimizing the MSE in \fref{eq:MSE}  and a more efficient but less accurate least-squares~(LS) estimator.
Our MSE results are in stark contrast to existing performance guarantees for the MAP or PM estimators, for which a nonasymptotic performance analysis is, in general, difficult.  
%
%
We provide inference results on synthetic data, which suggest that the inference quality of the proposed linear estimators is on par with state-of-the-art nonlinear estimators, especially at low signal-to-noise ratio~(SNR), i.e., when the quantization error  is lower than the noise level.  Moreover, we show using six different real-world binary regression datasets that the proposed linear estimators achieve competitive predictive performance to PM and MAP estimation at comparable or even lower complexity.



\section{Linearized Probit Regression}
\label{sec:mainresults}

To develop and analyze linearized inference methods for the standard probit model in \fref{eq:probitmodel}, we will first consider the following smoothed version of the probit model:
\begin{align} \label{eq:probitmodelgeneral}
\bar{\bmy} = f_\sigma(\bD\bmx+\bmw).
\end{align} 
We will then use these results to study the binary model \fref{eq:probitmodel}.
Here, $\bar{\bmy}\in[-1,+1]^M$, $\bmx$ is zero-mean Gaussian with known covariance~$\bC_\bmx$, the sigmoid function is defined as 
$f_\sigma(z) = 2 \Phi(z/\sigma) - 1$ and operates element-wise on its argument, $\sigma\in(0,\infty)$ is a smoothing parameter, and the vector $\bmw$ is assumed to be zero-mean Gaussian with known covariance~$\bC_\bmw$ and independent of $\bmx$.\footnote{We emphasize that these are standard model assumptions in Bayesian data analysis (see, e.g., \cite{hoffbook}) and in numerous real-world applications, such as modeling user responses to test items \cite{lordirt}.}
We emphasize that as $\sigma\to0$, the sigmoid function $f_\sigma(z)$ corresponds to the sign function and hence, the model in \fref{eq:probitmodelgeneral} includes the probit model in \fref{eq:probitmodel} as a special case. 
%
In what follows, we assume nondegenerate covariance matrices for $\bmx$ and $\bmw$, i.e., we assume that~$\bC_{\bmx}$ and $\bC_\bmw$ are both invertible. 
We next introduce two new linear estimators for this model and then, provide exact, closed-form, and nonasymptotic expressions for the associated MSEs.

\subsection{Linear Minimum Mean-Squared Error Estimator}
\label{sec:lmmse}

Our main  result is as follows.
\begin{thm} \label{thm:lmmseestimator}
The linear minimum mean-squared error (L-MMSE) estimate for the generalized probit model in \fref{eq:probitmodelgeneral} is
\begin{align} \label{eq:LMMSE}
\hat\bmx^\text{\em L-MMSE} = \bE^T \bC_{\bar{\bmy}}^{-1}\bar{\bmy},
\end{align}
where
\begin{align} 
\bE & =  \textstyle \left(\frac{2}{\pi}\right)^{1/2} \mathrm{diag}(\mathrm{diag}(\sigma^2\bI+\bC_\bmz)^{-1/2})\bD\bC_\bmx,\label{eq:Ematrixcompact} \\
\notag \bC_{\bar{\bmy}} & =\textstyle  \frac{2}{\pi} \arcsin ( \mathrm{diag}( \mathrm{diag}(\sigma^2\bI+\bC_\bmz)^{-1/2})  \bC_\bmz \\ & \quad \quad \times \mathrm{diag}(\mathrm{diag}(\sigma^2\bI+\bC_\bmz)^{-1/2}) ), \label{eq:arcsinelawcompact}
\end{align}
and $\bC_\bmz=\bD\bC_\bmx\bD^T+\bC_\bmw$.
\end{thm}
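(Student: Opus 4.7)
The plan is to reduce everything to the standard formula for the linear MMSE estimator of $\bmx$ from a zero-mean observation $\bar{\bmy}$, namely
\begin{align*}
\hat{\bmx}^{\text{L-MMSE}} = \Ex{}{\bmx\bar{\bmy}^T}\,\Ex{}{\bar{\bmy}\bar{\bmy}^T}^{-1}\bar{\bmy},
\end{align*}
and then identify $\bE^T = \Ex{}{\bmx\bar{\bmy}^T}$ and $\bC_{\bar{\bmy}} = \Ex{}{\bar{\bmy}\bar{\bmy}^T}$. The preliminary step is to verify that $\bar{\bmy}$ is zero-mean: since $\bmz = \bD\bmx+\bmw$ is zero-mean Gaussian, every entry $z_j$ is symmetric about $0$, and the sigmoid $f_\sigma(z)=2\Phi(z/\sigma)-1$ is odd, so $\Ex{}{\bar{y}_j}=0$.

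First I would compute the cross-covariance $\Ex{}{x_i \bar{y}_j}$ by Stein's lemma (Price's theorem). Since $(x_i,z_j)$ are jointly Gaussian with $\Cov{}{x_i,z_j}=(\bC_\bmx\bD^T)_{ij}$, Stein gives
\begin{align*}
\Ex{}{x_i f_\sigma(z_j)} = (\bC_\bmx\bD^T)_{ij}\,\Ex{}{f_\sigma'(z_j)}.
\end{align*}
With $f_\sigma'(z)=(2/\sigma)\phi(z/\sigma)$ and $z_j\sim\mathcal{N}(0,(\bC_\bmz)_{jj})$, a one-line Gaussian integral yields $\Ex{}{f_\sigma'(z_j)}=\sqrt{2/\pi}\,(\sigma^2+(\bC_\bmz)_{jj})^{-1/2}$. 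Packaging this for all $(i,j)$ gives exactly \eqref{eq:Ematrixcompact} for $\bE=\Ex{}{\bar{\bmy}\bmx^T}$, so $\bE^T$ is the needed cross-covariance.

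The hard part will be the auto-covariance $\Ex{}{\bar{y}_i\bar{y}_j}$, since a direct two-dimensional integral of two shifted Gaussian CDFs is unpleasant. The trick I would use is to rewrite the sigmoid as a randomized sign: introducing auxiliary independent standard normal variables $v_i,v_j$, one has $2\Phi(z_j/\sigma)-1 = \Prob(\sigma v_j \leq z_j \mid z_j) - \Prob(\sigma v_j > z_j \mid z_j) = \Ex{v_j}{\sign(z_j-\sigma v_j)}$. Therefore
\begin{align*}
\Ex{}{\bar{y}_i\bar{y}_j} = \Ex{}{\sign(z_i-\sigma v_i)\sign(z_j-\sigma v_j)},
\end{align*}
where $(z_i-\sigma v_i,\,z_j-\sigma v_j)$ is zero-mean jointly Gaussian with variances $(\bC_\bmz)_{ii}+\sigma^2$, $(\bC_\bmz)_{jj}+\sigma^2$, and covariance $(\bC_\bmz)_{ij}$. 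Applying the classical Gaussian arcsine law $\Ex{}{\sign(u)\sign(v)}=\tfrac{2}{\pi}\arcsin(\rho_{uv})$ to this pair yields
\begin{align*}
\Ex{}{\bar{y}_i\bar{y}_j} = \tfrac{2}{\pi}\arcsin\!\left(\frac{(\bC_\bmz)_{ij}}{\sqrt{(\bC_\bmz)_{ii}+\sigma^2}\sqrt{(\bC_\bmz)_{jj}+\sigma^2}}\right),
\end{align*}
which is exactly the entrywise form of \eqref{eq:arcsinelawcompact}. Combining these with the L-MMSE formula proves the theorem. The nondegeneracy assumption on $\bC_\bmx$ and $\bC_\bmw$ ensures that $\bC_\bmz$ is positive definite, so the diagonal normalizations and the invertibility of $\bC_{\bar{\bmy}}$ are well-defined.
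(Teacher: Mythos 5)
Your proposal is correct, and it reaches the theorem by a genuinely different route for the hard part. The cross-covariance computation (Stein's lemma applied to the jointly Gaussian pair $(x_i,z_j)$, followed by the one-line Gaussian integral for $\Exop[f_\sigma'(z_j)]$) is essentially identical to the paper's Step 1, which phrases the same calculation via Brillinger's lemma. Where you diverge is in two places. First, you invoke the standard orthogonality-principle formula $\hat\bmx=\Exop[\bmx\bar{\bmy}^T]\,\Exop[\bar{\bmy}\bar{\bmy}^T]^{-1}\bar{\bmy}$ directly (after checking $\Exop[\bar{\bmy}]=\bmzero$), whereas the paper first performs an explicit Bussgang decomposition $\bar{\bmy}=\bF\bmx+\bme$ and then applies L-MMSE to the linearized model; the two are equivalent, and your route is shorter, though the paper's decomposition is reused later for the LS estimator and the MSE lemmas. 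Second, and more substantially, for $\bC_{\bar{\bmy}}$ the paper evaluates $\Exop[\Phi(x/\sigma)\Phi(y/\sigma)]$ by a direct three-dimensional Gaussian integral, with a change of coordinates and a passage to polar coordinates that effectively re-derives the arcsine law from scratch. Your randomization trick $f_\sigma(z_j)=\Exop_{v_j}[\sign(z_j-\sigma v_j)]$ with auxiliary independent standard normals reduces the whole computation to the classical Sheppard/Grothendieck identity $\Exop[\sign(u)\sign(v)]=\tfrac{2}{\pi}\arcsin(\rho_{uv})$ applied to the inflated Gaussian pair; this is cleaner, explains conceptually why the smoothed model has the same arcsine structure as the $\sigma\to0$ sign model, and avoids the coordinate-geometry bookkeeping, at the price of citing the arcsine law as known rather than proving it.

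One small point to tighten: your identity $\Exop[\bar{y}_i\bar{y}_j]=\Exop[\sign(z_i-\sigma v_i)\sign(z_j-\sigma v_j)]$ needs the two auxiliary variables to be independent \emph{of each other}, which your indexing by $(i,j)$ only guarantees for $i\neq j$. For the diagonal entries $\Exop[\bar{y}_i^2]=(\Exop_{v}[\sign(z_i-\sigma v)])^2$ averaged over $z_i$, you must introduce two i.i.d.\ copies $v,v'$ so that the square becomes $\Exop[\sign(z_i-\sigma v)\sign(z_i-\sigma v')]$ with the correlated-Gaussian pair having covariance $(\bC_\bmz)_{ii}$ and variances $(\bC_\bmz)_{ii}+\sigma^2$; using a single copy would wrongly give $1$. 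With that one-sentence fix the argument covers all entries of $\bC_{\bar{\bmy}}$ and the proof is complete.
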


\begin{rem}
The reason that we focus on probit regression is that under the standard probit model, the matrices $\bE$ and $\bC_{\bar{\bmy}}$ exhibit closed-form expressions;  For logistic regression, such closed-form expressions do not exist. 
\end{rem}


\begin{proof}
The proof consists of two steps. First, we linearize the model in \fref{eq:probitmodelgeneral}.
Then, we derive the L-MMSE estimate in~\fref{eq:LMMSE} for the linearized model. The two  steps are as follows. 

{\em Step 1 (Linearization): } Let $\bmz=\bD\bmx+\bmw$ and 
\begin{align} \label{eq:bussgang}
\bar{\bmy}=f_\sigma(\bmz)=\bF\bmx+\bme
\end{align} be a linearization of the generalized probit model in \fref{eq:probitmodelgeneral}, where $\bF\in\reals^{M\times N}$  is a \emph{linearization matrix} and $\bme\in\reals^M$ is a \emph{residual error vector} that contains noise and linearization artifacts. 
Our goal is to perform a Bussgang-like decomposition \cite{bussgang}, which uses the linearization matrix~$\bF$ that minimizes the $\ell_2$-norm of the residual error vector~$\bme$ averaged over the signal and noise.
Concretely, let~$\bC_\bmz$ be the covariance matrix of the vector $\bmz$ and consider the optimization problem
\begin{align*}
\underset{\bF\in\reals^{M\times N}}{\mathrm{minimize}} \; \Ex{\bmx,\bmw}{\big\|\bar{\bmy} - \bF\bmx\big\|^2}\!,
\end{align*}
which has a closed-form solution that is given by $\bF = \bE \bC^{-1}_\bmx$ with $\bE= \Ex{\bmx,\bmw}{\bar{\bmy}\bmx^T}$.
It can easily be verified that for this particular choice of the linearization matrix $\bF$, the residual error vector $\bme$ and the signal of interest $\bmx$ are uncorrelated, i.e., we have $\Ex{\bmx,\bmw}{\bmx\bme^T}=\bZero_{N\times M}$.

We now derive a closed-form expression for the entries of the matrix $\bE$.  Since both $\bmx$ and $\bmw$ are independent and zero-mean Gaussian, the bivariate $(z_m,x_n)$ is jointly Gaussian for each index pair $\{m,n\}$. Moreover, we have $\Ex{t}{\abs{f_\sigma(t)}} < \infty$ and $\Ex{t}{\abs{t f_\sigma(t)}} < \infty$ if~$t$ is a zero-mean Gaussian random variable. Hence, we can use the following result that is due to Brillinger \cite[Lem.~1]{brillinger}:
\begin{align} \label{eq:brillinger}
[\bE]_{m,n}  = \Ex{\bmx,\bmw}{\bar{y}_mx_n} = \frac{\textit{Cov}(z_m,x_n)}{\textit{Var}(z_m)} \Ex{z_m}{\bar{y}_mz_m}\!,
\end{align}
where $\textit{Cov}(z_m,x_n)= \bmd_m^T\bmc_n$ with $\bmc_n$ being the $n$th column of $\bC_\bmx$.
Since for $\sigma>0$ the function $\bar{y}_m= f_\sigma(z_m)$ is absolutely continuous\footnote{The special case for $f_0(z_m)$ can either be derived by directly evaluating $\Ex{}{\sign(z_m)z_m}$ in \fref{eq:brillinger} or by first using Stein's Lemma and then letting $\sigma\to0$; both approaches yield the same result.}, $z_m$ is zero-mean Gaussian, and $\Ex{t}{f'_\sigma(t)} < \infty$,  we can invoke Stein's Lemma~\cite{stein}, which states that
\begin{align} \label{eq:stein}
\frac{\Ex{z_m}{f_\sigma(z_m)z_m}}{\textit{Var}(z_m)} = \Ex{z_m}{f'_\sigma(z_m)},
\end{align}
with $f'_\sigma(z)=\frac{\text{d}}{\text{d}z} f_\sigma(z)$.
Using $f_\sigma(x) = 2 \Phi(x/\sigma) - 1$, we can evaluate the right-hand side in \fref{eq:stein} as 
\begin{align}
 \Ex{z_m}{f'_\sigma(z_m)} & \textstyle = 2 \Ex{z_m}{ \Phi'\!\left({z_m}/{\sigma}\right)} \notag \\
& = \textstyle \frac{2}{\sigma} \int_{-\infty}^\infty \mathcal{N} \!\left({z_m}/{\sigma}; 0,1\right) \mathcal{N} (z_m; 0, \gamma_m) \mathrm{d} z_m  \notag \\
& \textstyle = \frac{2}{\sigma} \frac{\sigma'}{\sqrt{2 \pi\gamma_m}}  \int_{-\infty}^\infty \frac{1}{\sqrt{2 \pi} \sigma'} \exp\!\left(-\frac{z_m^2}{2 \sigma'^2}\right)\! \mathrm{d} z_m \notag \\
& = \textstyle \left(\frac{2}{\pi}\right)^{1/2}\! \frac{1}{\sqrt{\sigma^2 + \gamma_m}}, \label{eq:laststep1}
\end{align}
where  $\mathcal{N} (z; \mu, \sigma^2)$ denotes the probability density function of a Gaussian distribution with mean $\mu$ and variance $\sigma^2$ evaluated at $z$, $\gamma_m=\textit{Var}(z_m) = \bmd_m^T \bC_\bmx \bmd_m + [\bC_\bmw]_{m,m}$, and $\sigma'^2 = \frac{\sigma^2 \gamma_m}{\sigma^2 + \gamma_m}$.
Combining \fref{eq:brillinger} with \fref{eq:stein} and \fref{eq:laststep1} leads to 
\begin{align*}
[\bE]_{m,n} \textstyle = \left(\frac{2}{\pi}\right)^{1/2}\! \frac{ \bmd_m^T\bmc_n}{\sqrt{\sigma^2 + \bmd_m^T \bC_\bmx \bmd_m + [\bC_\bmw]_{m,m}}},
\end{align*}
where \fref{eq:Ematrixcompact} represents the entire matrix $\bE$ in compact notation.

%

{\em Step 2 (L-MMSE Estimator): }
We have linearized the probit model as $\bar{\bmy}=f_\sigma(\bmz)=\bF\bmx+\bme$ in~\fref{eq:bussgang} with $\bF = \bE \bC^{-1}_\bmx$. We now estimate $\bmx$ from this linearization using the L-MMSE estimator.
Since the residual distortion vector $\bme$ is uncorrelated to the vector $\bmx$, the L-MMSE estimator is given by  
\begin{align*}
\hat\bmx^\text{L-MMSE} = \bE^T \bC_{\bar{\bmy}}^{-1}\bar{\bmy},
\end{align*}
where $\bC_{\bar{\bmy}}=\Ex{\bmx,\bmw}{\bar{\bmy}\bar{\bmy}^T}$ is the covariance matrix of the generalized probit measurements in \fref{eq:probitmodelgeneral}.
The remaining piece is to calculate the individual entries of this matrix. 

With abuse of notation, we start by deriving the necessary expressions for a general pair of correlated but zero-mean Gaussian random variables $(x,y)$ with covariance matrix $\bC = [C_{x,x},C_{x,y};  C_{x,y} ,C_{y,y}]$.
More specifically, we are interested in computing the quantity 
\begin{align*}
\Ex{x,y}{f_\sigma(x)f_\sigma(y)} 
& = 4 \Ex{x,y}{\Phi({x}/{\sigma}) \Phi({y}/{\sigma})} + 1 \\
& \quad - 2\Ex{x}{\Phi({x}/{\sigma})} - 2\Ex{y}{\Phi({y}/{\sigma})} .
\end{align*}
Since
\begin{align*}
& \Ex{x}{\Phi({x}/{\sigma})} = \textstyle \int_{-\infty}^\infty \Phi({x}/{\sigma}) \mathcal{N}(x; 0, C_{x,x}) \mathrm{d} x  \\
 & \quad = \textstyle \int_0^\infty (\Phi(-{x}/{\sigma}) + \Phi({x}/{\sigma}))  \mathcal{N}(x; 0, C_{x,x}) \mathrm{d} x 
 = \frac{1}{2},
\end{align*}
we have
\begin{align} \label{eq:momentsrelation}
\Ex{x,y}{f_\sigma(x)f_\sigma(y)} &= 4 \Ex{x,y}{\Phi({x}/{\sigma}) \Phi({y}/{\sigma})} -1.
\end{align}
Hence, we only need a closed-form expression for $\Ex{x,y}{\Phi({x}/{\sigma}) \Phi({y}/{\sigma})}$, which we derive using direct integration. 
We rewrite this expression as follows:
\begin{align*}
& \Ex{x,y}{\Phi\!\left(\frac{x}{\sigma}\right)\! \Phi\!\left(\frac{y}{\sigma}\right)} 
= \int_{-\infty}^\infty \int_{-\infty}^\infty \Phi\!\left(\frac{x}{\sigma}\right) \Phi\!\left(\frac{y}{\sigma}\right) \\ 
& \quad \times \mathcal{N} \Big( \Big[ \begin{array}{c} x \\ y \end{array} \Big]; \bm0, \Big[ \begin{array}{cc} C_{x,x} & C_{x,y} \\ C_{x,y} & C_{y,y} \end{array} \Big] \Big) \mathrm{d}x \mathrm{d}y \\
& \quad = \int_{-\infty}^\infty \Phi\!\left(\frac{x}{\sigma}\right)\! \Phi\!\left(\frac{x}{\sigma'}\right) \mathcal{N}(x; 0, C_{x,x}) \mathrm{d}x,
\end{align*}
where the last equality follows from \cite[Sec.~3.9]{gpml} with $\sigma' = \frac{C_{x,x}}{C_{x,y}} \sqrt{\sigma^2 + C_{y,y} + \frac{C_{x,y}^2}{C_{x,x}}}$.  We now further simplify the above expression with the following steps:
\begin{align*}
& \Ex{x,y}{\Phi\!\left(\frac{x}{\sigma}\right)\! \Phi\!\left(\frac{y}{\sigma}\right)} \\
& \quad = \int_{-\infty}^\infty \Phi\!\left(\frac{x}{\sigma}\right)\! \Phi\!\left(\frac{x}{\sigma'}\right)\! \frac{1}{\sqrt{2 \pi C_{x,x}}} \exp\!\left(-\frac{x^2}{2 C_{x,x}}\right)\! \mathrm{d}x \\
&  \quad =  \int_{-\infty}^\infty \Phi\bigg(\frac{\sqrt{C_{x,x}}}{\sigma}x\bigg) \Phi\bigg(\frac{\sqrt{C_{x,x}}}{\sigma'}x\bigg) \mathcal{N}(x; 0,1) \mathrm{d} x.
\end{align*}
Using the definitions $\sigma_1 = {\sigma}/\!{\sqrt{C_{x,x}}}$ and $\sigma_2= {\sigma'}/\!{\sqrt{C_{x,x}}}$, we can rewrite the above expression as
\begin{align*}
& \Ex{x,y}{\Phi\!\left(\frac{x}{\sigma}\right)\! \Phi\!\left(\frac{y}{\sigma}\right)} = \int_{-\infty}^\infty \int_{-\infty}^{\frac{x}{\sigma_1}} \int_{-\infty}^{\frac{x}{\sigma_2}} \\
& \quad \quad \qquad \mathcal{N}(y; 0,1)  \mathcal{N}(z; 0,1) \mathrm{d} z \mathrm{d} y\, \mathcal{N}(x; 0,1) \mathrm{d} x.
\end{align*}
To evaluate this expression, it is key to observe that it corresponds to the cumulative probability density of a 3-dimensional normal random variable with zero mean and an identity covariance matrix on a region cut by two planes.  Imagine a cuboid with edge lengths $\{1, {1}/{\sigma_1}, {1}/{\sigma_2}\}$.  Assume $C_{x,y} > 0$ without loss of generality. The first plane has the normal vector $[1, -\sigma_1, 0]^T$, while the second plane has the normal vector $[1, 0, -\sigma_2]^T$. 
To find a convenient way to evaluate this integral, we need to find an appropriate change of coordinates.  Define the first new coordinate $x'$ as the intersection of the two planes, along the direction of $[1, {1}/{\sigma_1}, {1}/{\sigma_2}]^T$.  With proper normalization, this implies $x' = \frac{\sigma_1 \sigma_2 x + \sigma_2 y + \sigma_1 z}{\sqrt{\sigma_1^2 \sigma_2^2 + \sigma_1^1 + \sigma_2^2}}$.  
Then, we let the second coordinate $y'$ be orthogonal to $x'$ and also to the first plane, i.e., orthogonal to the normal vector of the first plane, $[1, {1}/{\sigma_1}, {1}/{\sigma_2}]^T$.  This gives $y' = \frac{\sigma_1^2 x + \sigma_1 y - \sigma_2 (\sigma_1^2 + 1) z}{\sqrt{\sigma_1^2 + 1}\sqrt{\sigma_1^2 \sigma_2^2 + \sigma_1^1 + \sigma_2^2}}$. 
The third coordinate is simply $z' = \frac{x - \sigma_1 y}{\sqrt{\sigma_1^2 + 1}}$, taken as the normal vector to the first plane.  The unit vector in the second plane that is orthogonal to $x'$ and $y'$ is given by $v' = \frac{\sigma_2^2 x - \sigma_1(\sigma_2^2 + 1) y - \sigma_2 z}{\sqrt{\sigma_2^2 + 1}\sqrt{\sigma_1^2 \sigma_2^2 + \sigma_1^1 + \sigma_2^2}}$. 
Since the new coordinates form a Cartesian system and are properly normalized, the determinant of the Jacobian is one, and the covariance matrix of the 3-dimensional normal random variable remains an  identity matrix. We first integrate over~$x'$ to obtain
\begin{align*}
\textstyle \Ex{x,y}{\Phi\!\left(\frac{x}{\sigma}\right)\! \Phi\!\left(\frac{y}{\sigma}\right)} = & \textstyle \int\!\! \int_\mathcal{C} \mathcal{N}(y'; 0,1) \mathcal{N}(z'; 0,1) \mathrm{d} y' \mathrm{d} z' ,
\end{align*}
where we have used $\mathcal{C}$ to denote the space to integrate over for the variables $y'$ and $z'$.  Since $\mathcal{C}$ is the area between the directions of $y'$ and $v'$ in the 2-dimensional plane, we use polar coordinates $y' = \rho \cos\theta$ and $z' = \rho \sin\theta$ to get
\begin{align*} 
& \textstyle \Ex{x,y}{\Phi\!\left(\frac{x}{\sigma}\right)\! \Phi\!\left(\frac{y}{\sigma}\right)} \\
& \,\, =   \textstyle \int_0^{\frac{\pi}{2}+\arcsin\!\left( \frac{1}{\sqrt{\sigma_1^2 + 1} \sqrt{\sigma_2^2 + 1}}\right)} \int_0^\infty \frac{1}{2 \pi} e^{-\frac{\rho^2}{2}} \rho \mathrm{d}\rho \mathrm{d} \theta \\
& \,\, = \textstyle  \frac{1}{4} + \frac{1}{2 \pi} \arcsin\!\left(\frac{C_{x,y}}{\sqrt{\sigma^2 + C_{x,x}} \sqrt{\sigma^2 + C_{y,y}}}\right)\!. 
\end{align*}
Consequently, we have
\begin{align*} 
\Ex{x,y}{f_\sigma(x)f_\sigma(y)} =\textstyle  \frac{2}{\pi} \arcsin\!\left(\frac{C_{x,y}}{\sqrt{\sigma^2 + C_{x,x}} \sqrt{\sigma^2 + C_{y,y}}}\right)\!,
\end{align*}
which allows us, in combination with \fref{eq:momentsrelation},  to express the desired covariance matrix $\bC_{\bar{\bmy}}$ as in \fref{eq:arcsinelawcompact}.  
\end{proof}

For the L-MMSE estimator in \fref{thm:lmmseestimator}, we can extract the MSE in closed form:
\begin{lem} \label{lem:mseoflmmseestimator}
The MSE of the L-MMSE estimator in \fref{thm:lmmseestimator} is given by
\begin{align*}
\textit{MSE}(\hat\bmx^\text{\em L-MMSE}) = \tr(\bC_\bmx - \bE^T \bC_{\bar{\bmy}}^{-1} \bE).
\end{align*}
\end{lem}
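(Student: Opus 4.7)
The plan is a direct computation that exploits the zero-mean property of both $\bmx$ and $\bar{\bmy}$. First I would note that since $\bmx$ is zero-mean Gaussian and $\bmw$ is zero-mean Gaussian (so $\bmz=\bD\bmx+\bmw$ is zero-mean Gaussian) and $f_\sigma$ is an odd function, $\bar{\bmy}=f_\sigma(\bmz)$ is itself zero-mean. Hence $\bE=\Ex{\bmx,\bmw}{\bar{\bmy}\bmx^T}$ and $\bC_{\bar{\bmy}}=\Ex{\bmx,\bmw}{\bar{\bmy}\bar{\bmy}^T}$ coincide with the cross-covariance and covariance, respectively, and similarly $\bC_\bmx=\Ex{}{\bmx\bmx^T}$.

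Next I would expand the squared error using the cyclic property of the trace:
\begin{align*}
\textit{MSE}(\hat\bmx^{\text{L-MMSE}}) &= \Ex{\bmx,\bmw}{\tr\!\big((\bmx-\hat\bmx^{\text{L-MMSE}})(\bmx-\hat\bmx^{\text{L-MMSE}})^T\big)}\\
&= \tr\!\big(\Ex{}{\bmx\bmx^T} - \Ex{}{\bmx\hat\bmx^{T}} - \Ex{}{\hat\bmx\bmx^T} + \Ex{}{\hat\bmx\hat\bmx^T}\big),
\end{align*}
where I write $\hat\bmx$ for $\hat\bmx^{\text{L-MMSE}}=\bE^T\bC_{\bar{\bmy}}^{-1}\bar{\bmy}$.

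Then I would substitute the L-MMSE formula into each term. The cross terms give $\Ex{}{\bmx\hat\bmx^T} = \Ex{}{\bmx\bar{\bmy}^T}\bC_{\bar{\bmy}}^{-1}\bE = \bE^T\bC_{\bar{\bmy}}^{-1}\bE$, and likewise for its transpose. The quadratic term gives $\Ex{}{\hat\bmx\hat\bmx^T} = \bE^T\bC_{\bar{\bmy}}^{-1}\Ex{}{\bar{\bmy}\bar{\bmy}^T}\bC_{\bar{\bmy}}^{-1}\bE = \bE^T\bC_{\bar{\bmy}}^{-1}\bE$, where I use the symmetry of $\bC_{\bar{\bmy}}$ and its invertibility (which is implicit since $\bC_\bmx$ and $\bC_\bmw$ are both nondegenerate, so $\bar{\bmy}$ has a nondegenerate covariance for $\sigma>0$; the probit case $\sigma\to 0$ is handled by the closed-form arcsine expression \eqref{eq:arcsinelawcompact}). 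Combining, the three non-trivial terms collapse to $-2\bE^T\bC_{\bar{\bmy}}^{-1}\bE + \bE^T\bC_{\bar{\bmy}}^{-1}\bE = -\bE^T\bC_{\bar{\bmy}}^{-1}\bE$, leaving $\tr(\bC_\bmx-\bE^T\bC_{\bar{\bmy}}^{-1}\bE)$.

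This is a routine orthogonality-principle calculation with no real obstacle; the only subtlety worth flagging is the invertibility of $\bC_{\bar{\bmy}}$ in the degenerate probit limit $\sigma\to 0$, which can be justified a posteriori from the explicit arcsine formula derived in the proof of Theorem~\ref{thm:lmmseestimator}, since $\bC_\bmz=\bD\bC_\bmx\bD^T+\bC_\bmw$ is positive definite and the entrywise arcsine preserves positive definiteness on the relevant range.
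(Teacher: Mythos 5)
Your proposal is correct and is essentially the computation that the paper's one-line proof points to: expanding $\Ex{\bmx,\bmw}{\|\bmx-\hat\bmx\|^2}$ with $\hat\bmx=\bE^T\bC_{\bar{\bmy}}^{-1}\bar{\bmy}$ and using $\Ex{}{\bar{\bmy}\bmx^T}=\bE$ and $\Ex{}{\bar{\bmy}\bar{\bmy}^T}=\bC_{\bar{\bmy}}$ so that the cross and quadratic terms collapse to $-\bE^T\bC_{\bar{\bmy}}^{-1}\bE$. Your side remarks (zero mean of $\bar{\bmy}$ by oddness of $f_\sigma$, and invertibility of $\bC_{\bar{\bmy}}$ including the $\sigma\to0$ limit via the arcsine formula) are sound and, if anything, slightly more careful than the paper's appeal to the uncorrelatedness of $\bmx$ and $\bme$, which your direct expansion does not even require.
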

\begin{proof}
The proof follows from the MSE definition  in \fref{eq:MSE} and the facts that $\bF = \bE \bC^{-1}_\bmx$ and the two vectors $\bmx$ and $\bme$ are uncorrelated for the L-MMSE estimator in \fref{eq:LMMSE}.
\end{proof}

By letting the parameter $\sigma\to0$ in \fref{eq:probitmodelgeneral}, we can use \fref{thm:lmmseestimator} and 
\fref{lem:mseoflmmseestimator} to obtain the following corollary for the standard probit model in \fref{eq:probitmodel}.
This result agrees with a recent result in wireless communications~\cite{mimo}.  
\begin{cor} \label{cor:lmmseestimator}
The L-MMSE estimate for the standard probit model in \fref{eq:probitmodel} is $\hat\bmx^\text{\em L-MMSE} = \bE^T \bC_\bmy^{-1}\bmy$,
where 
\begin{align*}
\bE & = \textstyle \left(\frac{2}{\pi}\right)^{1/2} \mathrm{diag}(\mathrm{diag}(\bC_\bmz)^{-1/2})\bD\bC_\bmx, \\
\bC_\bmy &= \textstyle \frac{2}{\pi} \arcsin ( \mathrm{diag}(\mathrm{diag}(\bC_\bmz)^{-1/2}) \bC_\bmz \\
& \quad \, \times \mathrm{diag}(\mathrm{diag}(\bC_\bmz)^{-1/2})),
\end{align*}
and $\bC_\bmz=\bD\bC_\bmx\bD^T+\bC_\bmw$. The associated MSE is given by $\textit{MSE}(\hat\bmx^\text{\em L-MMSE}) = \tr(\bC_\bmx - \bE^T \bC_\bmy^{-1} \bE)$.
\end{cor}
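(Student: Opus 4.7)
The plan is to obtain \fref{cor:lmmseestimator} as a direct specialization of \fref{thm:lmmseestimator} and \fref{lem:mseoflmmseestimator} by letting $\sigma \downarrow 0$ in the generalized probit model \fref{eq:probitmodelgeneral}. Since $f_\sigma(z) = 2\Phi(z/\sigma) - 1$ converges to $\sign(z)$ pointwise as $\sigma \to 0$, the smoothed observation $\bar{\bmy}$ reduces to the binary vector $\bmy$, and the Bussgang-like decomposition $\bmy = \bF\bmx + \bme$ with $\bF = \bE \bC_\bmx^{-1}$ carries over. The parameter $\sigma$ enters \fref{eq:Ematrixcompact} and \fref{eq:arcsinelawcompact} only through the $\sigma^2 \bI$ term added to $\bC_\bmz$, so formally setting $\sigma = 0$ yields precisely the expressions for $\bE$ and $\bC_\bmy$ claimed in the corollary, after which the MSE formula follows by substitution into \fref{lem:mseoflmmseestimator}.

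The only technical point worth discussing is that Stein's Lemma, invoked in Step~1 of the proof of \fref{thm:lmmseestimator}, requires $f_\sigma$ to be absolutely continuous, a property lost at $\sigma = 0$. As the footnote in that proof already anticipates, this is handled in one of two equivalent ways. One can recompute $[\bE]_{m,n}$ directly for the sign function using Brillinger's identity \fref{eq:brillinger} together with the standard half-normal moment $\Ex{z_m}{\sign(z_m) z_m} = \sqrt{2\gamma_m/\pi}$, which exactly reproduces the $\sigma \to 0$ limit of \fref{eq:Ematrixcompact}. Alternatively, one can argue by continuity: the right-hand sides of \fref{eq:Ematrixcompact} and \fref{eq:arcsinelawcompact} are continuous in $\sigma$ at $0$, and dominated convergence (using the uniform bound $|f_\sigma| \le 1$) justifies passing the limit inside the relevant expectations. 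Either route produces the same formulas.

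An analogous continuity remark covers the covariance: the expression $\Ex{x,y}{f_\sigma(x) f_\sigma(y)} = (2/\pi)\arcsin\!\bigl(C_{x,y}/\sqrt{(\sigma^2 + C_{x,x})(\sigma^2 + C_{y,y})}\bigr)$ derived in Step~2 of \fref{thm:lmmseestimator} specializes at $\sigma = 0$ to the classical arcsine law for the covariance of the signs of two jointly Gaussian variables, which is exactly the stated formula for $\bC_\bmy$. Since the proof of \fref{lem:mseoflmmseestimator} relies only on the orthogonality $\Ex{\bmx,\bmw}{\bmx \bme^T} = \bZero$ and the form of the L-MMSE estimator, both of which persist under the limit, the MSE expression $\tr(\bC_\bmx - \bE^T \bC_\bmy^{-1} \bE)$ is inherited directly. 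I anticipate no substantive obstacle; the corollary is a genuine specialization rather than a separate result, and the only care needed is the limit argument outlined above.
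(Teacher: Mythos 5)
Your proposal is correct and follows exactly the route the paper takes: the corollary is obtained by letting $\sigma\to0$ in \fref{thm:lmmseestimator} and \fref{lem:mseoflmmseestimator}, with the $\sigma=0$ case of Stein's Lemma handled either by direct evaluation of $\Ex{z_m}{\sign(z_m)z_m}$ or by a continuity/limit argument, just as the paper's footnote indicates. Your write-up merely makes the dominated-convergence and arcsine-law details more explicit than the paper bothers to.
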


\subsection{Least Squares (LS) Estimator}
The L-MMSE estimator as in \fref{eq:LMMSE}  requires the computation of~$\bC_{\bar{\bmy}}$ followed by a matrix inversion. For large-scale problems, one can avoid the matrix inversion by first solving $\bmy=\bC_{\bar{\bmy}} \bmq$  for $\bmq$ using conjugate gradients \cite{wrightbook}, followed by calculating $\hat\bmx^\text{L-MMSE} = \bE^T\bmq$.  Hence, the complexity of L-MMSE estimation is comparable to that of MAP estimation. 
Computation of~$\bC_{\bar{\bmy}}$, however, cannot be avoided entirely. 

Fortunately, there exists a simpler linear estimator that avoids computation of $\bC_{\bar{\bmy}}$ altogether, which we call the least-squares~(LS) estimator.
Concretely, let $M\geq N$ and consider the linearization in  \fref{eq:bussgang}, which is 
$\bar{\bmy}=f_\sigma(\bmz)= \bE \bC^{-1}_\bmx\bmx+\bme$.
By ignoring the residual error vector $\bme$ and by assuming that the columns of $\bE$ are linearly independent, we can simply invert the matrix $\bE \bC^{-1}_\bmx$, which yields the LS estimate
\begin{align} \label{eq:LSestimate}
\hat\bmx^\text{LS} = \bC_\bmx \bE^+\bar{\bmy},
\end{align}
where $\bE^+=(\bE^T\bE)^{-1}\bE^T$ is the left pseudo-inverse of $\bE$. Again, one can use conjugate gradients to implement \fref{eq:LSestimate}.
In contrast to the L-MMSE estimator, the LS estimator does not require knowledge of $\bC_{\bar{\bmy}}$, which makes it more efficient yet slightly less accurate (see the experimental results section for a comparison). 
As for the L-MMSE estimator, we have a closed-form expression for the MSE of the LS estimator.  
\begin{lem}  \label{lem:mseoflsestimator}
Assume that $\bE^+$ exists. Then, the MSE of the LS estimator in \fref{eq:LSestimate} is given by
\begin{align*}
\textit{MSE}(\hat\bmx^\text{\em LS}) = \tr(\bC_\bmx \bE^+\bC_{\bar{\bmy}}(\bE^+)^T\bC_\bmx -\bC_\bmx).
\end{align*}
\end{lem}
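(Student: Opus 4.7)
My plan is to carry out the standard bias-variance expansion of the mean-squared error and then use two elementary identities: the Bussgang-type formula $\bE = \Ex{\bmx,\bmw}{\bar{\bmy}\bmx^T}$ from the proof of \fref{thm:lmmseestimator}, and the pseudo-inverse identity $\bE^+\bE = \bI_N$, which holds since $\bE^+$ is assumed to exist (so $\bE$ has full column rank). No integral evaluations are needed here; the main work was already done in \fref{thm:lmmseestimator}.

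First I would expand
\begin{align*}
\textit{MSE}(\hat\bmx^\text{LS}) &= \Ex{\bmx,\bmw}{\|\bmx - \bC_\bmx\bE^+\bar{\bmy}\|^2} \\
&= \tr(\bC_\bmx) - 2\tr\!\big(\bC_\bmx\bE^+\Ex{\bmx,\bmw}{\bar{\bmy}\bmx^T}\big) \\
&\quad + \tr\!\big(\bC_\bmx\bE^+\Ex{\bmx,\bmw}{\bar{\bmy}\bar{\bmy}^T}(\bE^+)^T\bC_\bmx\big).
\end{align*}
For the cross term, I would substitute $\Ex{\bmx,\bmw}{\bar{\bmy}\bmx^T} = \bE$ and apply $\bE^+\bE = \bI_N$, which collapses the middle summand to $2\tr(\bC_\bmx)$. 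For the quadratic term, I would observe that $\bmz=\bD\bmx+\bmw$ is zero-mean Gaussian and that $f_\sigma$ is an odd function (since $2\Phi(-z/\sigma)-1 = -(2\Phi(z/\sigma)-1)$), so $\Ex{\bmx,\bmw}{\bar{\bmy}}=\bm0$ and therefore $\Ex{\bmx,\bmw}{\bar{\bmy}\bar{\bmy}^T} = \bC_{\bar{\bmy}}$. Assembling the three terms gives the claimed expression.

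There is no real obstacle — the hardest part is just bookkeeping to make sure that the nontrivial fact $\bE = \Ex{\bmx,\bmw}{\bar{\bmy}\bmx^T}$, which was established in the proof of \fref{thm:lmmseestimator}, is invoked cleanly, and that the zero-mean property of $\bar{\bmy}$ is justified so that the second moment coincides with the covariance $\bC_{\bar{\bmy}}$ appearing in \fref{eq:arcsinelawcompact}. Unlike the L-MMSE case, orthogonality of $\bmx$ and the residual $\bme$ is not used; here the cancellation comes purely from the left-inverse property of $\bE^+$.
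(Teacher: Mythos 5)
Your proof is correct and follows exactly the route the paper sketches: expand the MSE definition, use $\Ex{\bmx,\bmw}{\bar{\bmy}\bmx^T}=\bE$ and $\bE^+\bE=\bI$ to collapse the cross term, and identify the second moment of $\bar{\bmy}$ with $\bC_{\bar{\bmy}}$. Your explicit justification that $\bar{\bmy}$ is zero-mean (so the second moment equals the covariance) is a detail the paper leaves implicit but is consistent with its computation of $\Ex{x}{\Phi(x/\sigma)}=\tfrac{1}{2}$ in the proof of \fref{thm:lmmseestimator}.
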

\begin{proof}
The proof follows from the MSE definition \fref{eq:MSE}, and the facts that  $\Ex{}{\bar{\bmy}\bmx^T}=\bE$ and $\bE^+\bE=\bI$.
\end{proof}

\section{Numerical Results}
\label{sec:numericalresults}

We now experimentally demonstrate the efficacy of the proposed linear estimators.  
%
%

\subsection{Experiments with Synthetic Data}

We first compare the MSE of our estimators to that of the nonlinear MAP and PM estimators using synthetic data.  

%

\begin{figure*}[t]
\vspace{-0.5cm}
\centering
\subfigure[$M=10$, $N=5$.]{
\includegraphics[scale=0.33]{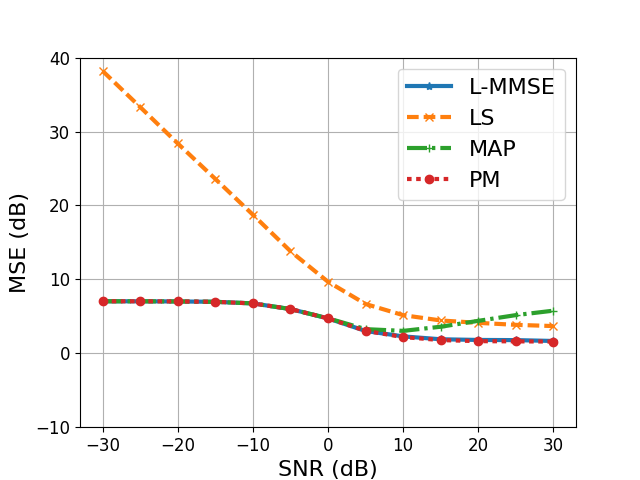}
} \hspace{0.4cm}
\subfigure[$M=50$, $N=5$.]{
\includegraphics[scale=0.33]{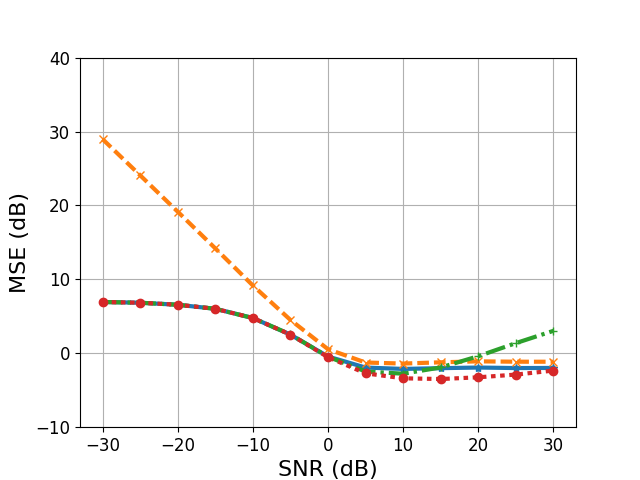}
} \hspace{0.4cm}
\subfigure[$M=200$, $N=5$.]{
\includegraphics[scale=0.33]{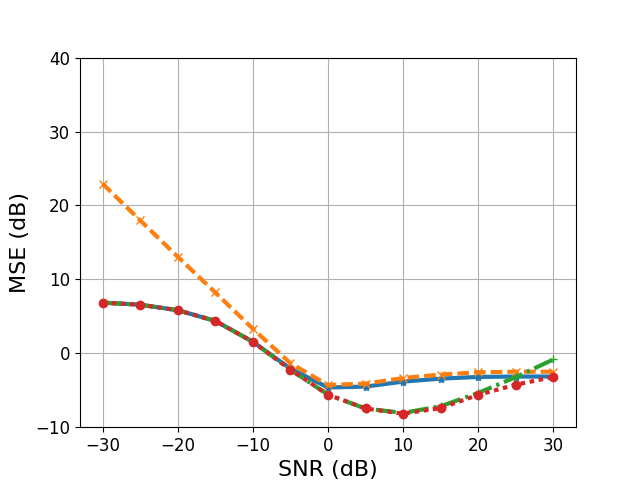}
}\\[-0.2cm]
\subfigure[$M=10$, $N=20$.]{
\includegraphics[scale=0.33]{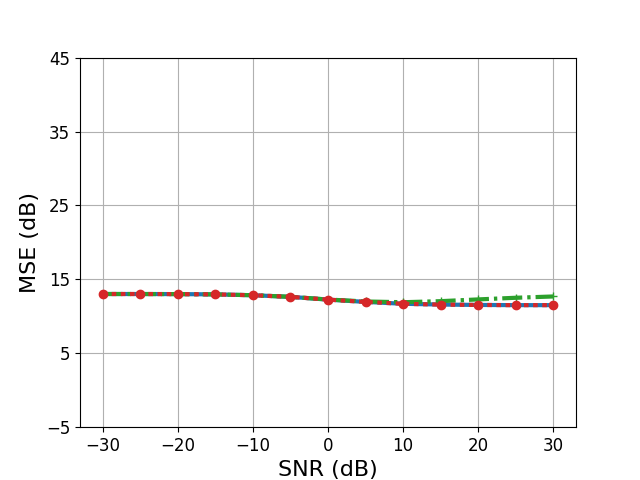}
} \hspace{0.4cm}
\subfigure[$M=50$, $N=20$.]{
\includegraphics[scale=0.33]{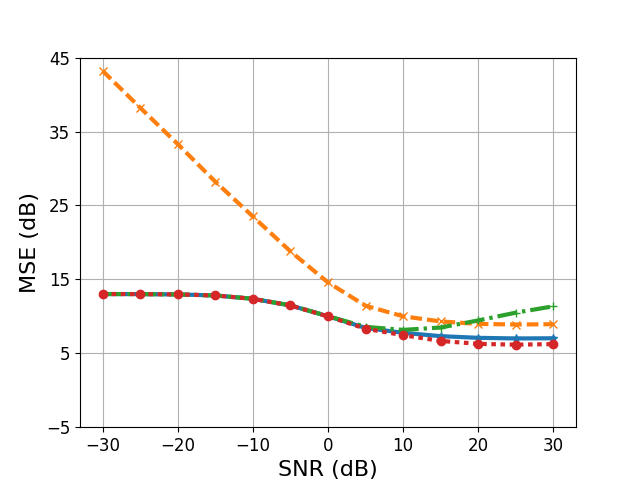}
} \hspace{0.4cm}
\subfigure[$M=200$, $N=20$.]{
\includegraphics[scale=0.33]{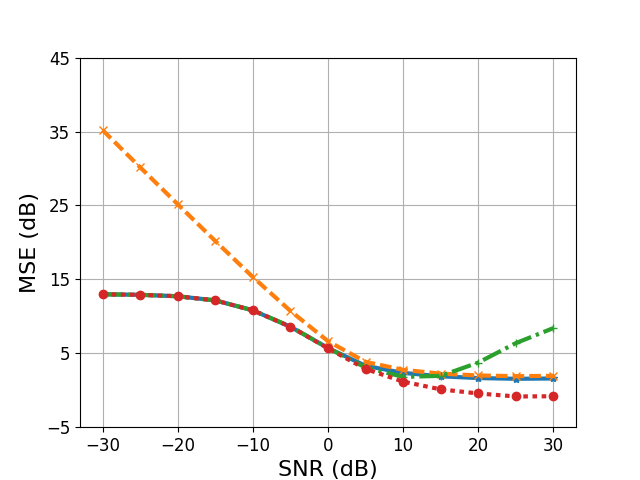}
}
\vspace{-0.1cm}
\caption{Mean squared error (MSE) versus signal-to-noise ratio (SNR) for various problem sizes $M \in \{ 10, 50, 200 \}$ and dimensions $N \in \{5, 20\}$. We see that for most parameter settings, the MSE of the proposed L-MMSE estimator is comparable to that of the optimal PM estimator; MAP estimation and LS estimation do not work as well at high and low SNR, respectively. }
\label{fig:compare}
\vspace{-0.0cm}
\end{figure*}


\subsubsection{Experimental Setup}
We set the dimensions of $\bmx$ to $N \in \{ 5, 20\}$ and the number of measurements to $M \in \{ 10, 50, 200\}$.  
We first generate a single random matrix $\bD$ of size $M \times N$ with i.i.d.\ standard normal entries, and normalize each row to have unit $\ell_2$-norm.  
Then, we generate the entries of $\bmx$ from a multivariate normal distribution with zero mean and covariance matrix $\bC_\bmx = \sigma_x^2 \bI$.  The entries of the noise vector $\bmw$ are i.i.d.\ zero-mean Gaussian with variance $\sigma_w^2$. The vector~$\bmy$ is generated using the standard probit model in \fref{eq:probitmodel}.
We sweep the SNR defined as $\textit{SNR} = {\sigma_x^2}/{\sigma_w^2}$ by changing the noise variance~$\sigma_w^2$.  
For the PM estimator, we use a standard Gibbs sampling procedure \cite{albertchib}; we use the mean of the generated samples over $50,000$ iterations as the PM estimate after a burn-in phase of $20,000$ iterations.
For the MAP estimator, we use an accelerated gradient-descent procedure~\cite{goldstein2014field,sparfa} to solve \fref{eq:MAPestimator} up to machine precision with a maximum number of $20,000$ iterations. 
We repeat all experiments for $100$ trials and report the empirical MSE.

\subsubsection{Results and Discussion}
\fref{fig:compare} shows the MSE of the L-MMSE, LS, MAP, and PM estimators.  We do not show LS for $M = 10$ and $N=20$ as it does not exist if $M < N$.  
We see that at low SNR ($\textit{SNR} \leq 0$\,dB), the L-MMSE, MAP, and PM estimators achieve a similar MSE.  
Hence, linearizing the probit model does not entail a noticeable performance degradation if the measurements are noisy. 
At higher SNR, the performance of the different estimators varies. For a small number of measurements, the performance of L-MMSE estimation is superior to MAP estimation. For a large number of measurements (e.g., $M = 200$), the MSE of L-MMSE estimation is slightly higher than that of MAP estimation for some SNR values. 
We note that the MSE performance of MAP degrades with increasing SNR, and we observe that there is an optimal SNR level for MAP estimation; this observation is in line with those reported in \cite{1bitmc} for 1-bit matrix completion using ML-type estimators.  Per design, PM estimation achieves the lowest MSE for all configurations, but is notoriously slow.
We conclude that linearized probit regression entails a negligible MSE performance loss compared to PM estimation, for a wide range of parameter settings.  

\begin{table*}[tp] 
\renewcommand{\arraystretch}{1.05}
\vspace{-0.0cm}
\centering
\caption{Mean and standard deviation of prediction quality in terms of prediction accuracy (ACC) for the L-MMSE, LS, MAP, PM, and Logit-MAP estimators on various real-world datasets.}  \label{tbl:acc}
\vspace{-0.1cm}
\scalebox{0.9}{ 
\begin{tabular}{@{}lccccc@{}}
\toprule
 & L-MMSE & LS & MAP & PM & Logit-MAP \\
\midrule
Admissions & $0.691 \pm 0.036 $ & $0.691 \pm 0.038 $  & $0.692 \pm 0.037 $ & $\bf 0.693\pm 0.036$ & $0.692 \pm 0.040 $ \\
Lowbwt & $0.703 \pm 0.070 $ & $0.707 \pm 0.071 $  & $\bf 0.715 \pm 0.064 $ & $0.713 \pm 0.067 $ & $0.712 \pm 0.070 $ \\
Polypharm & $0.779 \pm 0.015 $ & $0.777 \pm 0.016 $  & $\bf 0.780 \pm 0.015 $ & $\bf 0.780 \pm 0.015 $ & $\bf 0.780 \pm 0.015 $ \\
Myopia & $0.882 \pm 0.025 $ & $0.879 \pm 0.024 $  & $\bf 0.890 \pm 0.022 $ & $\bf 0.890 \pm 0.023 $ & $\bf 0.890 \pm 0.022 $ \\
Uis & $0.745 \pm 0.010 $ & $\bf 0.746 \pm 0.041 $  & $0.736 \pm 0.041 $ & $0.737 \pm 0.041 $ & $0.736 \pm 0.041 $ \\
SAheart & $0.727 \pm 0.042 $  & $0.726 \pm 0.044 $  & $0.728 \pm 0.042 $ & $\bf 0.730 \pm 0.042 $ & $0.729 \pm 0.042 $ \\
\bottomrule
\end{tabular}
}
\end{table*}

\begin{table*}[tp] 
\vspace{-0.0cm}
\renewcommand{\arraystretch}{1.05}
\centering
\caption{Prediction quality in terms of the area under the receiver operating characteristic curve (AUC) for the L-MMSE, LS, MAP, PM, and Logit-MAP estimators on various real-world datasets.} \label{tbl:auc}
\vspace{-0.1cm}
\scalebox{0.90}{ 
\begin{tabular}{@{}lccccc@{}}
\toprule
 & L-MMSE  & LS & MAP & PM & Logit-MAP \\
\midrule
Admissions & $\bf 0.675 \pm 0.056 $ & $0.672 \pm 0.054 $  & $0.674 \pm 0.056 $ & $0.674 \pm 0.056 $ & $\bf 0.675 \pm 0.056 $ \\
Lowbwt & $\bf 0.716 \pm 0.076 $ & $0.712 \pm 0.081 $ & $\bf 0.716 \pm 0.076 $ & $0.713 \pm 0.076 $ & $0.711 \pm 0.080 $ \\
Polypharm & $0.728 \pm 0.022 $ & $0.728 \pm 0.022 $  & $0.728 \pm 0.022 $ & $0.728 \pm 0.022 $ & $\bf 0.729 \pm 0.022 $ \\
Myopia & $0.864 \pm 0.038 $ & $0.862 \pm 0.040 $ & $\bf 0.873 \pm 0.036 $ & $\bf 0.873 \pm 0.036 $ & $\bf 0.873 \pm 0.035 $ \\
Uis & $0.632 \pm 0.052 $ & $0.632 \pm 0.051 $ & $\bf 0.634 \pm 0.052 $ & $\bf 0.634 \pm 0.051 $ & $0.633 \pm 0.052 $ \\
SAheart & $0.769 \pm 0.049 $  & $0.768 \pm 0.049 $ & $0.770 \pm 0.049 $ & $\bf 0.771 \pm 0.049 $ & $\bf 0.771 \pm 0.049 $ \\
\bottomrule
\end{tabular}
}
\end{table*}

\subsection{Experiments with Real-World Data}


We now validate the performance of the proposed linearized estimators using a variety of real-world datasets.  Since the noise model in real-world datasets is generally unknown, we also consider the  performance of MAP estimation using the logistic noise model (indicated by ``Logit-MAP'').

\subsubsection{Datasets}
We use a range of standard binary regression datasets in this experiment.  
These datasets include (i) ``Admissions'', which consists of binary-valued graduate school admission outcomes and features of the applicants, with $M = 400$ and $N=3$, (ii) ``Lowbwt'', which consists of low child birthweight indicators and features of their parents, with $M = 109$ and $N=10$, (iii) ``Polypharm'', which consists of whether an adult takes more than one type of prescription and their features, with $M =3,499$ and $N = 15$, (iv)  ``Myopia'', which consists of myopia test outcomes for adults and their personal features, with $M = 575$ and $N=11$, (v) ``Uis'', which consists of treatment outcomes for AIDS patients and their personal features, with $M = 618$ and $N=15$,  and (vi) ``SAheart'', which consists of whether a person has heart disease and their features, with $M=462$ and $N=9$.  
The first five datasets are taken from \cite{datasetbook} and the last one is from~\cite{tibsbook}. 

\subsubsection{Experimental Setup}
We evaluate the prediction quality of the L-MMSE, LS, MAP, PM, and Logit-MAP estimators using five-fold cross validation.  
We randomly divide the entire dataset into five nonoverlapping subsets, use four folds of the data as the training set and the other fold as the test set. We use the training set to estimate $\bmx$, and use it to predict the binary-valued outcomes on the test set.  For all experiments, we fix $\sigma_w^2 = 1$.  Since the variance $\sigma_x^2$ serves as a regularization parameter for $\bmx$, we select an optimal value of $\sigma_x^2$ using grid search on a separate validation set. 
To assess the performance of these estimators, we deploy the two most common metrics that characterize prediction quality: prediction accuracy (ACC) and area under the receiver operating characteristic curve (AUC)~\cite{accauc}.  Both metrics take values in $[0,1]$ and larger values indicate better prediction performance.  

\subsubsection{Results and Discussion}

Tables~\ref{tbl:acc} and~\ref{tbl:auc} show the mean and standard deviation of the performance of each estimator on both metrics across 20 random training/test partitions of the datasets.
We observe that the performance of L-MMSE, MAP, PM, and Logit-MAP are virtually indistinguishable on most datasets.  
%
LS estimation is, with a few exceptions, slightly worse than all the other estimators.  

We find it surprising that \emph{linearized probit regression} performs equally well as significantly more sophisticated nonlinear estimators on a broad range of real-world datasets. We also note that the proposed linearized estimators can be implemented efficiently and scale well to large datasets, which is in stark contrast to the PM estimator. 
%



\section{Conclusions}

We have shown that linearizing the well-known probit regression model in combination with linear estimators is able to achieve comparable estimation performance to nonlinear methods such as MAP and PM estimators for binary regression problems. Our linear estimators enable an exact, closed-form, and nonasymptotic MSE analysis, which is in stark contrast to existing analytical results for the MAP and PM estimators. 
We hence believe that the proposed linear estimators have the potential to be used in a variety of machine learning or statistics applications that deal with binary-valued observations.  




\balance
\bibliographystyle{IEEEtran} 
\bibliography{confs-jrnls,publishers,studer}

\end{document}